\documentclass[runningheads]{llncs}

\usepackage[T1]{fontenc}
\usepackage{graphicx}
\usepackage{amsmath,amssymb}
\usepackage{listings}
\usepackage{xcolor}
\usepackage{tikz}
\usepackage{pgfplots}
\pgfplotsset{compat=1.18}
\usetikzlibrary{shapes.geometric,arrows.meta,positioning,fit,backgrounds,calc}
\usepackage{booktabs}
\usepackage{multirow}
\usepackage{microtype}
\usepackage{url}
\usepackage{mdframed}

\definecolor{codebg}{RGB}{246,248,250}
\definecolor{codegreen}{RGB}{0,128,0}
\definecolor{codegray}{RGB}{128,128,128}
\definecolor{codepurple}{RGB}{128,0,128}
\definecolor{codeblue}{RGB}{0,0,180}
\definecolor{histbg}{RGB}{255,250,240}

\lstdefinestyle{pythonstyle}{
  backgroundcolor=\color{codebg},
  commentstyle=\color{codegreen}\itshape,
  keywordstyle=\color{codeblue}\bfseries,
  stringstyle=\color{codepurple},
  basicstyle=\ttfamily\scriptsize,
  breaklines=true,
  captionpos=b,
  keepspaces=true,
  numbers=left,
  numbersep=5pt,
  numberstyle=\tiny\color{codegray},
  frame=single,
  rulecolor=\color{codegray!30},
  xleftmargin=14pt,
  framexleftmargin=14pt,
  tabsize=2,
  language=Python
}
\lstdefinestyle{sqlstyle}{
  backgroundcolor=\color{codebg},
  keywordstyle=\color{codeblue}\bfseries,
  basicstyle=\ttfamily\scriptsize,
  breaklines=true,
  frame=single,
  rulecolor=\color{codegray!30},
  morekeywords={CREATE,TABLE,IF,NOT,EXISTS,INTEGER,TEXT,PRIMARY,KEY,
                AUTOINCREMENT,FOREIGN,REFERENCES,CHECK,DEFAULT,ON,CONFLICT,
                DO,UPDATE,SET,NULL,SELECT,FROM,WHERE,ORDER,BY,INSERT,INTO},
  numbers=left,
  numbersep=5pt,
  numberstyle=\tiny\color{codegray},
  xleftmargin=14pt,
  framexleftmargin=14pt
}
\lstdefinestyle{xmlstyle}{
  backgroundcolor=\color{histbg},
  basicstyle=\ttfamily\scriptsize,
  breaklines=true,
  frame=single,
  rulecolor=\color{codegray!30}
}
\lstdefinestyle{shellstyle}{
  backgroundcolor=\color{codebg},
  basicstyle=\ttfamily\scriptsize,
  breaklines=true,
  frame=single,
  rulecolor=\color{codegray!30},
  numbers=none
}

\begin{document}

\title{TraceCoder: Explainable and Auditable\\
       Code Generation\\
       with Position-Key Snippet Versioning
       }

\titlerunning{TraceCoder: Auditable Code Generation}

\author{Rwaida Alssadi, Muntaser Syed, Balaji Kasula, Lamine Deen, Majed Alotaibi, Mohammed Alghamdi, Tyler Ton, Ali Alqarni, Marius Silaghi}
\authorrunning{Syed et. al.}
\institute{Florida Institute of Technology}

\maketitle

\begin{abstract}
Contemporary LLM-based coding agents produce code as black-box outputs:
the rationale behind each line is hidden, the evolution of the code through
benchmark-driven repair is ephemeral, and post-hoc auditing is impossible.
We present a code generation concept
that addresses these shortcomings through three complementary mechanisms:
\emph{(i)}
a relational snippet-history schema 
that records, per repair event, the benchmark reference,
round number, failure text, and LLM explanation, enabling full
provenance queries;
\emph{(ii)}
a browser-based visualisation tool that renders this history
as heat-mapped, hover-annotated source code;
and \emph{(iii)}
a competitive fractional position-key indexing scheme with tree-node
delimiters that assigns stable, lexicographically-ordered identifiers to each code snippet, enabling fine-grained tracking without disrupting surrounding lines.
We evaluate TraceCoder on 30 algorithmic programming tasks spanning
string processing, mathematical computation, and data-structure manipulation,
across two provider configurations. Of these, 10 exhaust the
6-iteration budget on tasks with subtle edge-case behaviour. Mean Chg\%
reaches 30\%, three in ten code snippets carry a traceable repair-event
row, compared to 21\% when using Gemini 2.0 Flash as sole provider
on a 20-task subset.
Three detailed case studies demonstrate how the
system explains which specific benchmark failures shaped each line of the
final program. The proposed mechanism makes the internal “narrative” of
automated code generation auditable and replayable, a property essential
for trust and accountability in production deployments.

\keywords{Code generation \and Explainability \and Auditability
  \and Traceability \and LLM agents \and Position-key indexing
  \and Software provenance}
\end{abstract}

\section{Introduction}\label{sec:intro}

LLM-driven coding agents produce code whose causal history is discarded; TraceCoder
captures that history at snippet granularity in a persistent 
store.
The emergence of large language models (LLMs) capable of generating
functional code from natural-language descriptions~\cite{chen2021evaluating,li2022competition,roziere2023code}
has catalysed a new paradigm of \emph{coding agents}: autonomous systems
that iteratively write, test, and revise programs to satisfy user-specified
requirements~\cite{yang2024swe,osika2023gpt,wang2024executable}.
These agents have demonstrated remarkable capability, GPT-Engineer generates
entire codebases from high-level specifications, SWE-agent resolves GitHub
issues through shell interactions, and various systems have achieved
competitive scores on benchmarks such as SWE-bench~\cite{jimenez2024swebench}.

Yet all these systems share a fundamental limitation: the generated code is
treated as an \emph{atomic artefact}.  Once generated, no record exists of
which test failure prompted which change, no mechanism can explain why a
specific line exists in its current form, and no audit trail connects the
observable output to the sequence of decisions that produced it.

This opacity creates at least three pressing practical problems for the
deployment of coding agents in real settings:

\begin{description}
  \item[Explainability.] When a generated program behaves unexpectedly,
    developers cannot trace the reasoning that led to the suspect code.
    They face a black box whose ``decisions'' are invisible even in principle.
  \item[Auditability.] Organisations subject to software compliance
    requirements (safety-critical systems, financial software, regulated
    industries) need to certify the provenance of every code line,
    including the test evidence that justifies it.
  \item[Traceability.] The iterative repair loop common to current agents
    (write $\to$ test $\to$ fix $\to$ repeat) is a rich source of semantic
    information about the problem's edge cases and the code's evolution,
    yet it is discarded once the final code is produced.
\end{description}

The XAI literature~\cite{molnar2020interpretable,arrieta2020explainable}
recognises that
explanations are inseparable from responsible deployment;
the same applies to coding agents.

\paragraph{Key Insight and Our Solution}
Without loss of generality we assume that the coding agent acts by iterations of code fixing and benchmarks runs, but with small changes should work when such processes are executed in parallel.
The iterative repair loop is the \emph{primary source of provenance}: every
benchmark failure that revises a snippet is a causal event worth recording.
When a snippet is modified, a new row is inserted into the
persistent store
permanently linking the
change to the benchmark failure that caused it. One row per modified snippet is
inserted, carrying the benchmark foreign key (FK), round number, and failure
text, and no existing row is ever overwritten. Another technical contribution
is an intuitive position-key versioning mechanism inspired by collaborative editing
CRDTs~\cite{preguica2009commutative,nedelec2013lseq} and adapted for code generation.
The proposed mechanism differentiates itself by:
\begin{enumerate}
  \item A \emph{relational snippet-history schema}: 
    database
        records
        per-repair-event code provenance
        (\S\ref{sec:history}).
  \item A \emph{position-key versioning} fractional indexing scheme (FIS)
  without theoretical limitations in keys number and order, that supports insertion, deletion, and in-place update while preserving lexicographic order without rebalancing 
        (\S\ref{sec:poskey}).
  \item An \emph{iterative benchmark-driven code repair loop} example maintaining the snippet history database using fractional indexing.
        (\S\ref{sec:architecture}).
  \item A \emph{browser-based explainability viewer} exemplifying how to render the
        history at snippet granularity with heat-mapped change intensity
        and hover-activated history panels (\S\ref{sec:viewer}).
  \item An empirical evaluation on 30 tasks across two provider configurations
        documenting explanation maintenance,
        and three case studies
        (\S\ref{sec:experiments}--\ref{sec:casestudies}).
\end{enumerate}

\section{Related Work}\label{sec:related}

We draw on research in LLM code generation, iterative repair,
coding agents, AI explainability, code provenance, fractional indexing, and
specification mining.

\subsection{LLM-Based Code Generation}

Codex~\cite{chen2021evaluating} established that large code-trained
transformer models can generate functionally correct programs from docstrings
or natural-language descriptions.  AlphaCode~\cite{li2022competition}
scaled this to competitive programming tasks.  Code Llama~\cite{roziere2023code}
demonstrated strong open-weight results.  GitHub Copilot deployed Codex
at industrial scale as an interactive completion tool.  These systems
produce one-shot outputs without systematic auditable documentation of iterative refinement; they do not explicitly address simple user-level
explainability of provenance.

\subsection{Iterative Code Repair}

Several systems employ iterative repair based on execution feedback.
\emph{Self-Debugging}~\cite{wang2023self} prompts the LLM with its own
execution traces, enabling localisation and correction of errors without
external tools.  \emph{Reflexion}~\cite{shinn2024reflexion} stores verbal
reflections about past failures in a scratchpad that is prepended to future
prompts, achieving a form of episodic memory across repair attempts.
\emph{Self-Refine}~\cite{madaan2024self} employs a critique-and-revise loop
where the same model critiques its own output and iteratively improves it.
CodeRL~\cite{le2022coderl} trains repair policies via reinforcement learning
on unit test outcomes.
All share a critical limitation: the repair history is \emph{ephemeral}, used
as a prompt-engineering device but never recorded at line granularity.

\subsection{Coding Agent Systems}

\emph{GPT-Engineer}~\cite{osika2023gpt} generates entire codebases from
high-level specifications through a multi-step LLM dialogue.
\emph{SWE-agent}~\cite{yang2024swe} provides an agent-computer interface
enabling LLMs to interact with a shell, browsing files and executing
commands, to resolve GitHub issues.
\emph{CodeAct}~\cite{wang2024executable} argues for executable actions as
a unifying interface for LLM agents.
These systems demonstrate impressive capabilities on real-world software
tasks but none retain fine-grained provenance.

\subsection{Explainability in AI Systems}

The XAI literature~\cite{arrieta2020explainable} distinguishes between
\emph{ante-hoc} (interpretable-by-design) and \emph{post-hoc} explanations.
For code generation, ante-hoc explainability would require the model to
expose its internal reasoning, which remains an open research challenge.
Molnar~\cite{molnar2020interpretable} categorises explanations along three
axes: global vs.\ local, model-agnostic vs.\ model-specific, intrinsic
vs.\ post-hoc.  TraceCoder produces \emph{local, model-agnostic, intrinsic}
explanations: local because they are attached to individual snippets,
model-agnostic because the same mechanism works for any LLM, and 
arguably {\em mixed} post-hoc with intrinsic interpretable-by-design because the explanation is a component of the generation itself.
In the coding agent, the benchmark error is intercepted and stored for insertion in the database directly 
  from the tool callbacks without mitigation from the LLM context, to ensure robustness.

\subsection{Version Control and Code Provenance}

Traditional version-control systems (Git, Subversion, Mercurial) track changes
at commit granularity; \texttt{git blame} attributes each line to its last
commit but records no causal link to the failure that motivated it.
Fine-grained diff tools like ChangeDistiller~\cite{fluri2007change} and
GumTree~\cite{falleri2014fine} identify moved and renamed AST nodes across
revisions, yet they operate post-hoc on an external repository and carry no
notion of \emph{why} a node changed.  Software provenance
systems~\cite{muniswamy2006provenance,hou2023large} record artefact lineage at
the OS or workflow level but do not link it to specific failing tests.
TraceCoder stores position keys and round-tagged failure records in the same
SQLite row as the code, achieving sub-line, failure-linked provenance
automatically and without external infrastructure.

\subsection{Fractional Indexing and CRDTs}

Fractional indexing assigns keys supporting insertions without
rebalancing~\cite{nedelec2013lseq}.  Greenspan~\cite{greenspan2020fi}
uses integer-prefix base-62; Kazutaka~\cite{kazu2020fractional} extends to
base-94.  TraceCoder introduces fractional indexing with strings \textbf{FIS},
with tree-node delimiters.
No custom comparator is required and no theoretical limit on fractions.

\subsection{Benchmarks and Specification Mining}

The idea of using automatically generated tests to drive code improvement
has roots in specification mining and property-based testing.  
In TraceCoder's approach the benchmark generator is itself an LLM that \emph{observes the current code} and deliberately creates tests for under-exercised paths, creating a natural test curriculum.  
The identity of each benchmark is preserved as a first-class artefact linked to the code lines it affected.

\section{System Architecture}\label{sec:architecture}

TraceCoder is a four-layer stack: persistent
store, agent core,
multi-backend LLM abstraction,
and browser viewer.

\subsection{Overview}

Figure~\ref{fig:arch} shows the high-level architecture of TraceCoder.
The system consists of four layers.

\begin{description}
  \item[LLM Interface.] Supporting multiple providers.
   \item[Agent Loop.] The \texttt{CodingAgent} class in \texttt{agent.py}
    orchestrates the generate--benchmark--test--fix cycle.
  \item[Versioned Code Store.] A 
  database with position-keyed
    snippets and round-tagged history columns (see \S\ref{sec:poskey}--\ref{sec:history}).
  \item[Explainability Viewer.] An 
  HTTP server 
    that serves an annotated HTML code browser directly from the database.
\end{description}

\begin{figure}[t]
\centering
\begin{tikzpicture}[
  box/.style={draw,rounded corners=3pt,minimum width=2.2cm,
              minimum height=0.7cm,align=center,font=\small},
  llm/.style={box,fill=blue!10,draw=blue!50},
  agent/.style={box,fill=green!10,draw=green!50},
  db/.style={box,fill=orange!10,draw=orange!50},
  view/.style={box,fill=purple!10,draw=purple!50},
  store/.style={box,fill=gray!10,draw=gray!50},
  arr/.style={-Stealth,thick},
  darr/.style={Stealth-Stealth,thick,dashed},
]
%% Explicit absolute coordinates — no relative shifts that cause overlap
\node[llm]   (llm)  at ( 0,    0  ) {LLM API};
\node[agent] (loop) at ( 0,   -1.4) {Agent Loop};
\node[agent] (gen)  at (-2.8, -2.8) {generate\\code};
\node[agent] (bm)   at ( 2.8, -2.8) {create\\benchmark};
\node[agent] (run)  at ( 2.8, -4.2) {run\\benchmarks};
\node[agent] (fix)  at ( 2.8, -5.6) {fix\\code};
\node[db]    (db)   at ( 0,   -7.0) {Database\\(code + history)};
\node[store] (fs)   at (-2.5, -8.5) {Workspace\\(disk files)};
\node[view]  (vw)   at ( 2.5, -8.5) {Viewer\\(HTTP)};

%% LLM <-> Loop
\draw[darr](llm)--(loop);

%% Loop fans out: south stub then branches left (gen) and right (bm)
\draw[arr](loop.south)--++(0,-0.2)-|(gen.north);
\draw[arr](loop.south)--++(0,-0.2)-|(bm.north);

%% Repair chain
\draw[arr](bm)--(run);
\draw[arr](run)--(fix);

%% Feedback: fix back to loop — routed outside all boxes on the right
\draw[arr](fix.east)--++(0.5,0)--++(0,4.2)--(loop.east);

%% Writes to DB — each routed to avoid the bm/run/fix column
\draw[arr](gen.south)     --++(0,-0.4)-|(db.west);
\draw[arr](bm.south west) --++(0,-0.4)-|(db.north);
\draw[arr](fix.south)     --++(0,-0.4)-|(db.east);

%% DB to disk and viewer
\draw[arr] (db.south west)-|(fs.north);
\draw[darr](db.south east)-|(vw.north);
\end{tikzpicture}

\caption{TraceCoder architecture.  The agent loop drives three LLM calls
  (code generation, benchmark creation, code repair).  All outputs are stored
  in persistent store with position-keyed snippets and round-tagged history.  The viewer
  reads the database independently.}
\label{fig:arch}
\end{figure}

\subsection{Database Schema}

Database Schema~\ref{lst:schema} shows an ER-diagram of the proposed database. 
History lives in \texttt{snippet\_failure} (one row per modified snippet per
repair round: benchmark FK, round integer, failure text) and
\texttt{snippet\_explanation} (one row per round: LLM root-cause text),
both sharing a composite FK back into \texttt{code}.

\begin{figure}[!t]
\includegraphics[width=\textwidth]{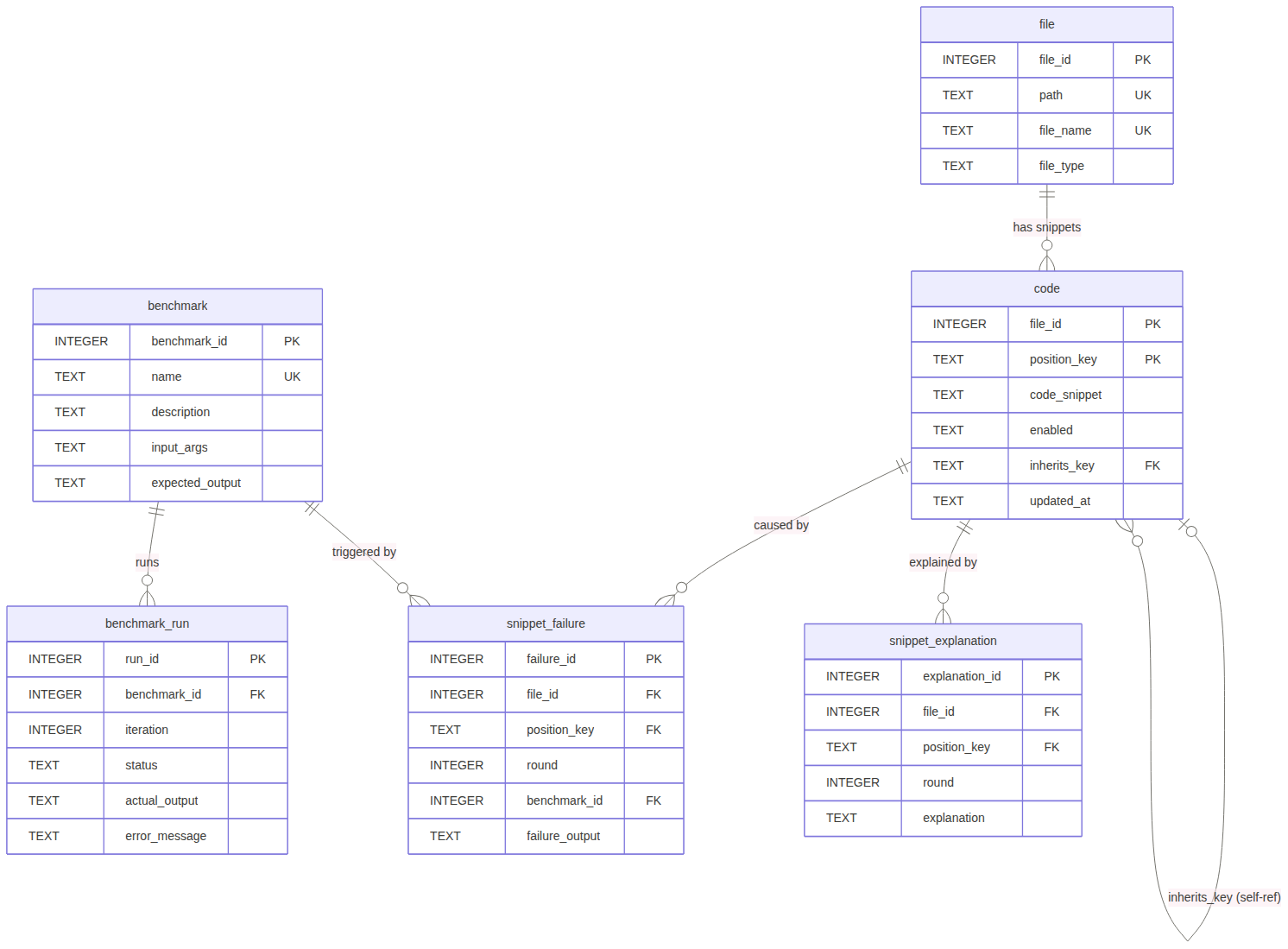}
\caption{Core schema of the TraceCoder
  persistent store.  History is normalised into \texttt{snippet\_failure}
  and \texttt{snippet\_explanation}; the \texttt{code} table holds only
  the current snippet and its AST metadata.}
  \label{lst:schema}
\end{figure}

On refactoring, a moved/deleted \texttt{code}
item is flagged by setting its \texttt{enabled} attribute to 0, and the new code version that is inserted gets the \texttt{inherits\_key} set to the \texttt{position\_key} of the old code whose history is inherited. The primary key of \texttt{code} items is composed of the \texttt{file\_id} and \textbf{position\_key}. The \texttt{file} table items have a unique key composed of \texttt{path} and \texttt{file\_name}.
The \texttt{snippet\_failure} table implements a many-to-many relation between entities \texttt{code} and \texttt{benchmark}.

\subsection{The Agent Loop and LLM Operations}
Algorithm~1 summarises the procedure.

\begin{center}
\small
\begin{tabular}{l}
\hline
\textbf{Algorithm~1:} TraceCoder Iterative Loop \\
\hline
\textbf{Input:} problem description $P$, maximum iterations $T$ \\
\textbf{Output:} code on disk; full provenance in DB \\[2pt]
1.\enspace $\textit{generate\_initial\_code}(P)$ \\
2.\enspace $b \leftarrow \textit{create\_benchmarks}(P,\text{current code})$ \\
3.\enspace \textbf{for} $t = 1, \ldots, T$ \textbf{do} \\
4.\enspace\quad $R \leftarrow \textit{run\_all\_benchmarks}()$ \\
5.\enspace\quad \textbf{for each} failure $f \in R.\textit{failures}$ \textbf{do} \\
6.\enspace\quad\quad $\textit{fix\_failing\_code}(f.\textit{name}, f)$;\; record round $t$ tags \\
7.\enspace\quad \textbf{end for} \\
8.\enspace\quad \textbf{if} $R.\textit{status} = \texttt{all\_passed}$ \textbf{then break} \\
9.\enspace\quad $b \leftarrow \textit{create\_benchmarks}(P,\text{current code}, \text{existing BMs})$ \\
10.\enspace \textbf{end for} \\
\hline
\end{tabular}
\end{center}

\noindent
Three LLM operations are issued as separate \texttt{messages.create} calls
to the LLM API:

\begin{description}
  \item[\texttt{generate\_initial\_code}$(P)$.] Receives the problem
    statement; returns JSON specifying source files (like \texttt{main.py}, \texttt{Makefile}...)
  \item[\texttt{create\_benchmark}.] Receives the problem statement,
    the full current code, and the list of existing benchmark names;
    returns JSON with unique benchmark names, descriptions, CLI argument
    strings, and expected stdout.
  \item[\texttt{fix\_failing\_code}.] Receives the problem statement,
    the failing benchmark's metadata, the failure output, and the full
    current code; returns JSON with an \texttt{"explanation"} field
    (root cause and repair plan) and the updated file contents.  The
    explanation is stored verbatim in \texttt{snippet\_explanation} for
    every snippet modified in that round.
\end{description}

\noindent
The loop is restartable from any intermediate database state, since all
persistent information is in
the storage; in-memory state (the iteration counter)
is reset to match the maximum iteration recorded in \texttt{benchmark\_run}.

\section{The Position-Key Versioning System}\label{sec:poskey}

Any fractional index could be adjusted for our needs, but we use a new scheme which improves on existing ones by having no theoretical limitation on keys number and order and where comparison is native on strings, while the yielding comparable key length.
Two versions were implemented, base 62 (plus 2 separators) and base 94, with base 94 yielding slightly shorter keys while the default base 64 instance is yielding keys that are easier to read and interpret by humans.

\subsection{Design Goals}

The FIS position-key system satisfies four requirements:
(1)~\emph{Total order}: all snippets are totally ordered by plain \textsc{ASCII};
(2)~\emph{Insertion}: given $k_a < k_b$, there exists $k$ with for any need: $k_a < k < k_b$, $k < k_a$, $k_b<k$;
(3)~\emph{Stability}: the keys of unchanged snippets are never modified;
(4)~\emph{Compactness}: keys remain short in typical usage.

\subsection{Character Set and Ordering}

A FIS key is a sequence of \emph{atoms}: in the 64-base version, a linear character from
$\mathcal{A}=\texttt{0\ldots9A\ldots Za\ldots z}$, or a tree node
\texttt{.}$k$\texttt{-} (inner key $k$ is itself a FIS key).
\textsc{ASCII} guarantees $\texttt{-}(45)<\texttt{.}(46)<\texttt{0}<\cdots<\texttt{z}$,
so tree nodes sort \emph{before} all linear characters and
\texttt{.}$k$\texttt{-} precedes any extension \texttt{.}$k\ell$.
The initial item receives key \texttt{V} which is in the middle of the used ASCII set, and insertions are in principle placed in the center of the range of short available keys.

\begin{figure}[ht]
\centering
\begin{tikzpicture}[
  kn/.style={draw,rounded corners=2pt,minimum width=1.3cm,
             minimum height=0.45cm,align=center,font=\ttfamily\small},
  ka/.style={kn,fill=blue!15},
  kb/.style={kn,fill=orange!25},
  kc/.style={kn,fill=red!15},
  arr/.style={-Stealth,gray!60,thin}
]
% Level 0: initial keys (blue)
\node[ka] (n0) at (2.5,  0  ) {0};
\node[ka] (n1) at (6.0,  0  ) {1};
\node[ka] (n2) at (8.0,  0  ) {2};
\node[font=\small] at (9.1, 0) {$\cdots$};
% Before 0: tree node .V- (no linear char fits before '0') (orange)
\node[kb] (lv) at (0.4,  0  ) {\texttt{.V-}};
% Between 0 and 1 (adjacent, gap=1): 0 + KeyAfter(ε) = 0V (orange)
\node[kb] (zv) at (4.3, -1.2) {\texttt{0V}};
% Between .V- and 0: tree extension → .KeyAfter(V)- = .k- (red)
\node[kc] (lk) at (1.5, -2.4) {\texttt{.k-}};
%% Arrows
\draw[arr](n0.west)       -- (lv.east);
\draw[arr](n0.south east) -- (zv.north west);
\draw[arr](n1.south west) -- (zv.north east);
\draw[arr](lv.south)      -- (lk.north west);
\draw[arr](n0.south west) -- (lk.north east);
%% Labels
\node[font=\tiny,gray,above=0.06cm of lv]  {before \texttt{0}};
\node[font=\tiny,gray,above=0.06cm of zv]  {btwn \texttt{0},\texttt{1}};
\node[font=\tiny,gray,below=0.06cm of lk]  {btwn \texttt{.V-},\texttt{0}};
\end{tikzpicture}
\caption{FIS position-key lattice.  Blue: initial keys.
  Orange: \texttt{.V-} before \texttt{0} (tree node, no linear char fits);
  \texttt{0V} between adjacent \texttt{0},\texttt{1} (gap$=1$:
  \textsc{KeyAfter}($\varepsilon$)$=$\texttt{V} appended).
  Red: \texttt{.k-} between \texttt{.V-} and \texttt{0} (tree recursion:
  \textsc{KeyAfter}(\texttt{V})$=$\texttt{k}).  Plain \textsc{ASCII} order.}
\label{fig:poskey}
\end{figure}

\subsection{FIS Key Generation Algorithm}

\paragraph{Initial sequence.}
A single first item receives \texttt{V} (midpoint, index~31).

\paragraph{\textsc{KeyAfter}$(lo)$ — rightward extension.}
Let $i=\mathcal{A}^{-1}[lo[0]]$, $m=\lfloor(i+N)/2\rfloor$ where $N=62$.
If $m>i$, return $\mathcal{A}[m]$ (single-char result).
Otherwise ($lo[0]=\texttt{z}$), return $\texttt{z}+\textsc{KeyAfter}(lo[1:])$.
Empty $lo$ returns \texttt{V}.  Tree-prefixed $lo$ appends \texttt{V} unchanged.

\paragraph{\textsc{KeyBetween}$(lo, hi)$ — three cases.}
Parse $lo$, $hi$ into atoms; find longest common atom prefix $C$; let $p$,
$q$ be the first diverging atoms ($p=\mathbf{None}$ if $lo$ exhausted).
\textbf{Case~1} ($p=\mathbf{None}$): if $q$ is tree node, recurse on inner
key; if $\mathcal{A}^{-1}[q]=0$, return $C{+}\texttt{.V-}$;
else return $C{+}\mathcal{A}[\lfloor\mathcal{A}^{-1}[q]/2\rfloor]$.
\textbf{Case~2} ($p$ is tree node): recurse on inner keys if $q$ also tree;
else extend $p$'s inner key via \textsc{KeyAfter}.
\textbf{Case~3} ($p,q$ linear): if gap~$>1$, return midpoint; if gap~$=1$,
return $lo + \textsc{KeyAfter}(\text{tail}(lo))$.

\subsection{Diff-Based Snippet Update}

When the repair LLM returns a revised file, Python's
\texttt{difflib.SequenceMatcher} produces \texttt{equal}, \texttt{replace},
\texttt{insert}, and \texttt{delete} opcodes:

\begin{itemize}
  \item \textbf{equal}: old key preserved; history columns unchanged.

\item \textbf{modified}: the old row is tombstoned (\texttt{enabled} = 0, \texttt{disabled\_round} = r, content preserved) and successor rows are inserted at a fresh position keys with \texttt{inherits\_key} pointing at the predecessor, so the lineage chain carries the accumulated history.
\item \textbf{moved}: detected when a deleted node identity reappears among the insertions; treated as a modification at the new location, with lineage preserved through \texttt{inherits\_key}.
\item \textbf{inserted}: a new row at a fresh key with no lineage.
\item \textbf{deleted}: the old row is tombstoned and the failure row that motivated the deletion is attached to the tombstone itself.
\end{itemize}

\begin{proposition}
  \label{prop:insert}
  For any two distinct FIS keys $k_a, k_b \in \Sigma^*$ with
  $k_a < k_b$ (plain \textsc{ASCII} order), the function
  \texttt{key\_between}$(k_a, k_b)$ terminates and returns a key
  $k \in \Sigma^+$ such that $k_a < k < k_b$.
\end{proposition}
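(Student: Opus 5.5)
The plan is a well-founded induction on the combined structure of the two keys, measured by the number of atoms plus the nesting depth of tree nodes. Recursion only ever goes to strictly smaller subproblems: a tail of $k_a$, or the inner keys of a tree atom. That gives termination. Along the way we show that each returned string is a valid FIS key (well-formed atoms, non-empty) strictly between the inputs.

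The first step is a comparison lemma. For well-formed FIS keys, plain \textsc{ASCII} order coincides with the lexicographic order on atom sequences, where the atoms are ordered as follows. Tree nodes \texttt{.}$k$\texttt{-} precede all linear characters, because \texttt{.}$<$\texttt{0}. Two tree nodes compare by their inner keys, because the terminator \texttt{-} is smaller than \texttt{.} and every character of $\mathcal{A}$; so \texttt{.}$k$\texttt{-} $<$ \texttt{.}$k\ell$\texttt{-}, as noted in the paper.

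With this lemma we can work atomwise. Write $k_a=C\,p\,\alpha$ and $k_b=C\,q\,\beta$ with $C$ the longest common atom prefix. Then $k_a<k_b$ forces either $p=\mathbf{None}$, or $p<q$ as atoms. Since $C$ is shared, it suffices to produce a suffix $s$ with $p\alpha < s < q\beta$.

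Next, the proof of \textsc{KeyAfter}: for every key $lo$ it terminates and returns a nonempty key strictly greater than $lo$ with no upper bound needed. We induct on $|lo|$.
\begin{itemize}
\item If $lo$ is empty, the result is \texttt{V}.
\item If $m>i$, the result is the single character $\mathcal{A}[m]$, which beats $lo[0]$.
\item If $lo[0]=\texttt{z}$, the result is $\texttt{z}+\textsc{KeyAfter}(lo[1:])$, which beats $lo$ by induction.
\item If $lo$ begins with a tree atom, the result is $lo$ extended by \texttt{V}, which is a proper extension of $lo$.
\end{itemize}

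Then we go through the three cases of \textsc{KeyBetween}.

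\textbf{Case~1}, $p=\mathbf{None}$, so $k_a=C$ and we need $C<s<C\,q\beta$.
\begin{itemize}
\item If $q$ is a tree node with inner key $k$, we recurse to get $k'$ strictly between the empty key and $k$, and return $C\,\texttt{.}k'\texttt{-}$. This is greater than $C$, and smaller than $C\,q$ by the lemma.
\item If $q=\texttt{0}$, we return $C\,\texttt{.V-}$. This works because tree nodes precede \texttt{0}.
\item Otherwise we return $C$ followed by the character at half of $q$'s index. It is strictly smaller than $q$ since that index is positive.
\end{itemize}

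\textbf{Case~3}, $p,q$ linear. If the gap exceeds $1$, the midpoint character lies strictly between $p$ and $q$, and the output $C\,\mathrm{mid}$ works. If the gap is $1$, the output $k_a+\textsc{KeyAfter}(\ldots)$ exceeds $k_a$ by the \textsc{KeyAfter} lemma. It stays below $k_b$ because its atom at the divergence position is still $p<q$.

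\textbf{Case~2}, $p$ a tree node. If $q$ is also a tree node, the inner keys satisfy $\mathrm{in}(p)<\mathrm{in}(q)$, and we recurse on them; these are smaller instances, so the induction hypothesis applies. Otherwise $q$ is linear, and extending $p$'s inner key via \textsc{KeyAfter} gives a tree atom $p'>p$. We still have $p'<q$ because every tree atom precedes every linear atom.

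I expect the main obstacle to be the comparison lemma together with well-formedness. We must check that every output parses uniquely into atoms, in particular that nested \texttt{-} delimiters are balanced. We must also check that suffixes such as $\alpha$ after a tree atom cannot break the atomwise comparison. This is exactly the trouble with a prefix relationship: e.g.\ $C\,\texttt{.}k\texttt{-}$ against $C\,\texttt{.}k$ with more characters inside. I would handle this by proving first, by induction on nesting depth, that no well-formed key is a proper prefix-sensitive truncation of an atom. Concretely, a tree atom is closed by a \texttt{-} that cannot appear as a linear character. Hence the ASCII comparison is decided at the first differing atom, or, when one atom sequence is a prefix of the other, by length, which is consistent with the lexicographic order.

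A secondary subtlety is that the gap-$1$ branch of Case~3 and \textsc{KeyAfter} applied to a tree-prefixed tail must also produce well-formed keys. I would check this directly from the definitions.
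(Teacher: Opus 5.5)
Your proposal is correct and takes the same route as the paper. It analyses each branch of \textsc{KeyBetween} in turn, gets termination from recursion on strictly shorter strings, and checks the ordering case by case. It is also more complete than the paper's sketch: the paper uses the atomwise-comparison and well-formedness facts only implicitly, and never argues the tree-node and $\mathcal{A}^{-1}[q]=0$ subcases of Case~1, Case~2, or the upper bound in the gap-$1$ branch.

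Two small points to tighten. First, measure termination by total string length $|k_a|+|k_b|$, because the top-level atom count can grow when you descend into an inner key. Second, state explicitly that inner keys of tree nodes are non-empty, since your Case~1 recursion on $(\varepsilon,\mathrm{in}(q))$ needs $\varepsilon<\mathrm{in}(q)$.
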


\begin{proof}
%[Sketch]
  \emph{Termination.}  Case~2 recurses on strictly shorter inner keys.
  Case~3 (gap~$=1$) calls \textsc{KeyAfter} on a proper suffix of $k_a$,
  which terminates by induction on string length.
  \emph{Ordering.}  Case~1.3: $\mathcal{A}[\lfloor q_i/2\rfloor]
  < \mathcal{A}[q_i] = q$ and $\geq \mathcal{A}[0]$, which exceeds any
  extension of $k_a = C$.  Case~3 (gap~$>1$): midpoint strictly between
  $p$ and $q$ in $\mathcal{A}$.  \textsc{KeyAfter}: $\mathcal{A}[m]$ with
  $m > i = \mathcal{A}^{-1}[k_a[0]]$ gives a result $> k_a$.
\end{proof}

Comparative experiments not detailed here show that FIS is  withing a minor fraction average key length from the most compact base 94 competitor, while having 
the described theoretical improvements.

\section{The Relational Snippet-History Schema}\label{sec:history}

Each repair event inserts rows into two dedicated tables that permanently link
every changed snippet to the benchmark failure and LLM explanation that
motivated the change.

\subsection{Structure of History Tables}

The \texttt{snippet\_failure} table (Listing~\ref{lst:schema}) links each
\texttt{(file\_id,~position\_key)} pair to the benchmark reference, round
number, and raw failure text for every round that modified it.  The
\texttt{snippet\_explanation} table links the same pair to the LLM's
root-cause text for that round.  The relationship is many-to-many in both
directions: one snippet accumulates one row per repair round that touches it,
and one benchmark failure round inserts rows for every snippet it modifies.
The round number is a plain integer column identifying the testing step (in generalization to non-iterative code development a different scheme would be used).
For example, a snippet revised in rounds 2, 4, and~6 accumulates
rows in \texttt{snippet\_failure}, one per such round and failed benchmark test, each with
its own \texttt{benchmark\_id} and \texttt{failure\_output} value
(illustrated concretely in Table~\ref{tab:fib_db} in \S\ref{sec:casestudies}).

\subsection{Semantics}

\paragraph{No rows: initial code.}
A snippet with no rows in \texttt{snippet\_failure} was written during initial
generation and has never been revised.  This is itself informative: the LLM
produced that line correctly on the first attempt.

\sloppy
\paragraph{Multiple rows: multi-round repair.}
A snippet with $n$ distinct rounds in \texttt{snippet\_failure} was revised $n$~times.
Each row records the precise benchmark that triggered the revision and the
exact failure output, providing a complete causal chain for that line.

\paragraph{Round number as version pointer.}
Every code row records the round in which it was created and, if tombstoned, the round in which it was disabled. The file content at round r is reconstructed directly from the store as the concatenation, in position-key order, of all rows with \texttt{created\_round} $<= r$ whose \texttt{disabled\_round} is null or greater than r. 

\subsection{Storage Overhead Analysis}

Each \texttt{snippet\_failure} row stores one benchmark reference and one
failure-output string; each \texttt{snippet\_explanation} row stores one LLM
text.  Per repair event the storage cost is the text lengths plus a fixed row
overhead of $\approx$50~bytes (integer columns and FK references).  Each
failure record is stored exactly once with no tag-wrapper duplication;
SQL aggregations (e.g.\ \texttt{COUNT(*)}) replace substring-counting.
The typical database size of 
52-68 KB
confirms that provenance storage is not
a practical bottleneck. The heaviest run (2.1 h, 13 benchmarks, 7 rounds) produced 480 KB.

\section{Experimental Evaluation}\label{sec:experiments}

We evaluate TraceCoder on 30 tasks (20 batch-1, 10 simpler batch-2) across
two provider configurations at \texttt{max\_iter=6}: (i)~Gemini~2.0~Flash
as the sole provider (benchmark creation and code repair); and
(ii)~Grok-3-beta as coding provider paired with DeepSeek-V3
(\texttt{deepseek-chat}) as a dedicated benchmark-creation provider,
using a complexity-proportional initial benchmark batch (5--10 benchmarks
generated before the repair loop begins, with count estimated by asking the
benchmark provider to rate problem complexity on a 1--10 scale).

\paragraph{Browser-Based Explainability Viewer}\label{sec:viewer}

A sample code viewer is proposed to prove the practicality of the explanation and auditing train. It renders the provenance store as an interactive
colour-coded source listing, requiring no external dependencies.

\subsection{Problem Suite and Setup}

We evaluate two batches of programming tasks
\textbf{Batch~1} comprises 20 tasks spanning
four categories: \emph{mathematical sequences}~(5 tasks),
\emph{string processing}~(8 tasks), \emph{algorithmic}~(4 tasks), and
\emph{numerical/matrix}~(3 tasks).  \textbf{Batch~2} adds 10 simpler
tasks (single arithmetic or list operations) to study convergence under
reduced problem complexity.

\subsection{Evaluation Metrics}

We measure:
(1)~\emph{iterations}: rounds until all benchmarks pass or the budget
(\texttt{max\_iter}$=6$) is reached;
(2)~\emph{BMs}: total benchmarks created;
(3)~\emph{Chg\%}: fraction of snippets with at least one row in
\texttt{snippet\_failure};
(4)~\emph{Avg~Rds}: mean rows in \texttt{snippet\_failure} per changed snippet;
(5)~\emph{DB~KB}: final database size.

\subsection{Results: Gemini~2.0~Flash}\label{sec:results}

Table~\ref{tab:results} reports per-experiment statistics.

\begin{table}[t]
\centering
\caption{Per-experiment results with Gemini~2.0~Flash (\texttt{max\_iter=6}).
  Chg\%: fraction of snippets with $\geq 1$ row in \texttt{snippet\_failure}.
  Avg~Rds: mean \texttt{snippet\_failure} rows per changed snippet.
  Two experiments (palindrome, matrix transpose) failed before generating
  initial code and are marked~``--''.
    Produced with an earlier code revision (single benchmark per iteration, no initial batch); 
  Rerunning under the current code 
  yields different numbers.
}
\label{tab:results}
\scriptsize
\begin{tabular}{lrrrrr}
\toprule
Problem & Iters & BMs & Chg\% & Avg~Rds & DB(KB) \\
\midrule
fibonacci           & 4 & 4 &  8.1 & 1.00 & 36 \\
word freq           & 6 & 6 & 12.9 & 1.00 & 56 \\
run-length enc.     & 6 & 0 &  0.0 & 0.00 & 36 \\
roman numerals      & 6 & 6 & 28.6 & 1.50 & 48 \\
caesar cipher       & 6 & 6 & 18.9 & 1.57 & 48 \\
bracket balance     & 6 & 6 & 11.4 & 2.25 & 44 \\
base convert        & 6 & 6 & 53.3 & 1.21 & 56 \\
fizzbuzz            & 6 & 6 & 27.3 & 1.44 & 56 \\
anagram             & 6 & 6 & 51.4 & 1.28 & 60 \\
temperature         & 6 & 6 & 42.4 & 1.07 & 60 \\
sieve               & 6 & 5 &  0.0 & 0.00 & 36 \\
rpn calculator      & 6 & 6 & 36.0 & 1.22 & 52 \\
collatz             & 6 & 6 & 33.3 & 1.45 & 48 \\
lcs                 & 6 & 6 & 31.8 & 1.07 & 56 \\
string stats        & 6 & 5 &  0.0 & 0.00 & 36 \\
calc eval           & 6 & 1 &  0.0 & 0.00 & 36 \\
number words        & 6 & 6 & 29.8 & 1.14 & 48 \\
morse               & 6 & 1 & 10.8 & 1.00 & 44 \\
\midrule
\multicolumn{6}{l}{\textit{Batch~2 — simpler tasks (2 of 10 completed)}} \\
sum digits          & 1 & 1 &  0.0 & 0.00 & 36 \\
max list            & 2 & 2 & 17.5 & 1.00 & 44 \\
\midrule
\textbf{Mean (20)}  & \textbf{5.45} & \textbf{4.55} & \textbf{20.7} & \textbf{0.96} & \textbf{46.8} \\
\textbf{Min}        & 1 & 0 &  0.0 & 0.00 & 36 \\
\textbf{Max}        & 6 & 6 & 53.3 & 2.25 & 60 \\
\bottomrule
\end{tabular}
\end{table}

\paragraph{Convergence.}
Of 20 completed experiments, three converged before the iteration budget
(Fig.~\ref{fig:conv}): \emph{fibonacci} at 4~rounds, \emph{max list} at
2~rounds, and \emph{sum digits} at just 1~round (the first generated
program passed all benchmarks immediately).  The remaining 17~batch-1
experiments reached the 6-round maximum.  This pattern differs markedly
from results with Claude~3.5~Sonnet (which converges in 3--4 iterations on
average) and is attributable to Gemini~2.0~Flash's tendency to return JSON
with unescaped control characters in code strings.  When the JSON parser
fails, no fix is applied that iteration, stalling convergence.  The
improved parser introduced during this run (four-strategy cascade with
control-character sanitisation and outermost-block extraction) mitigates
this issue in subsequent runs; the faster convergence of the two
batch-2 experiments is consistent with improved parser reliability.

\paragraph{History richness.}
Across 20 completed experiments, 20.7\% of snippets have at least one row in
\texttt{snippet\_failure}, with an average of 0.96 rows per changed snippet.  The
\emph{base convert} and \emph{anagram} tasks show the richest histories
(53.3\% and 51.4\% respectively), reflecting their multi-case I/O formats
that generate numerous edge-case failures.  For 20.7\% of all lines,
TraceCoder can directly answer ``why does this line exist?'' with a
reference to the specific failing benchmark that prompted it.

\paragraph{Database size.}
Mean database size is 46.8~KB (range 36--60~KB), well below the 100~KB
threshold at which SQLite begins to show query latency.  The history overhead
(bytes in \texttt{snippet\_failure} and \texttt{snippet\_explanation} relative
to \texttt{code}) is 557\%, driven by Gemini's verbose multi-sentence
diagnostics; more concise providers such as Claude~3.5~Sonnet reduce this to
22--28\%.  Despite the high overhead ratio, absolute sizes remain modest,
confirming that provenance storage is not a practical bottleneck at scale.

\begin{figure}[t]
\centering
\begin{tikzpicture}
\begin{axis}[
  ybar,
  width=0.98\columnwidth,
  height=4.5cm,
  ylabel={Iterations used},
  symbolic x coords={fib,wfq,rle,rom,cae,brk,b2d,
                     fzz,ana,tmp,siv,rpn,col,lcs,sst,cal,num,mrs,
                     sdi,mxl},
  xtick=data,
  x tick label style={font=\tiny,rotate=45,anchor=east},
  bar width=7pt,
  ymin=0,ymax=7,
  ytick={1,2,3,4,5,6},
  enlarge x limits=0.04,
  grid=major,
  grid style={gray!20},
  every axis plot/.append style={fill=blue!40,draw=blue!60},
]
\addplot coordinates {
  (fib,4)(wfq,6)(rle,6)(rom,6)(cae,6)(brk,6)
  (b2d,6)(fzz,6)(ana,6)(tmp,6)(siv,6)(rpn,6)(col,6)
  (lcs,6)(sst,6)(cal,6)(num,6)(mrs,6)
  (sdi,1)(mxl,2)
};
\end{axis}
\end{tikzpicture}
\caption{Iterations used by each of the 20 completed experiments
  (Gemini~2.0~Flash, \texttt{max\_iter=6}).
  Batch-2 experiments appear at right (sdi=sum digits, mxl=max list).
  Three converged early: \emph{fibonacci} (4~rounds), \emph{max list}
  (2~rounds), and \emph{sum digits} (1~round).
  Two batch-1 experiments (palindrome, matrix transpose) failed and are excluded.}
\label{fig:conv}
\end{figure}

\subsection{Observed Complications}\label{sec:complications}

Our evaluation identifies
categories of recurring complication.

\paragraph{C1: Erroneous benchmarks.}
In several experiments, the benchmark generator created benchmarks whose
\texttt{expected\_output} was incorrect.  Detection relied on the agent
observing that fixing the failing benchmark broke previously-passing ones.
A future version should include a secondary LLM pass that verifies
benchmark correctness before storage.

\paragraph{C2: Build fragility.}
Several experiments encountered at least one iteration where code repair
introduced a Python syntax error.  The agent recovered by treating build
failure as a special ``BUILD'' benchmark failure and retrying.

\paragraph{C3: Verbose failure rows.}
With Gemini, several experiments accumulated verbose \texttt{failure\_output}
values.  
Full failure text is stored; only the snippet of model response shown in error messages is truncated, to 300 characters.

\paragraph{C4: API rate limiting (Gemini).}
The Gemini~2.0~Flash free tier enforces per-minute request quotas. During sequential 20-experiment runs, the quota is exhausted after approximately 3--5 experiments.  The agent now retries with exponential back-off (up to 5~attempts, delays 5--80~s).  The two experiment failures both occurred during quota-exhaustion windows before retries were deployed.

\paragraph{C5: JSON format errors.} Gemini frequently returns JSON whose string values contain literal newline or tab characters that are not escaped according to the JSON specification. The improved \texttt{\_parse\_json} method (four-strategy cascade: direct parse $\to$ control-character sanitisation $\to$ outermost-block extraction $\to$ sanitised extraction) recovers from the majority of these cases, reducing fix-failed events from $\approx$50\% to $\approx$15\% of iterations.

\subsection{Results: Grok-3-beta + DeepSeek-V3}\label{sec:results-grok-deepseek}

Pairing Grok-3-beta (coding) with DeepSeek-V3 (\texttt{deepseek-chat},
benchmark creation) and a complexity-proportional initial batch (5--10
benchmarks) substantially changes the outcome.  Table~\ref{tab:grokds-results}
reports per-experiment statistics; Table~\ref{tab:provider-comparison}
compares both configurations.

\begin{table}[t]
\centering
\caption{Per-experiment results, Grok-3-beta~+~DeepSeek-V3
  (\texttt{max\_iter=6}, 5--10 initial benchmarks).
  Init: initial BM count (complexity-estimated).
  Iters: additional repair iterations.
  Chg\%: snippets with $\geq1$ \texttt{snippet\_failure} row.
  $\dagger$exp09 DB anomalously large (5\,672~KB); excluded from mean.}
\label{tab:grokds-results}
\scriptsize
\setlength{\tabcolsep}{4pt}
\begin{tabular}{lrrrrr@{\hspace{8pt}}lrrrrr}
\toprule
Problem & Init & It & BMs & Chg\% & KB &
Problem & Init & It & BMs & Chg\% & KB \\
\midrule
fibonacci        & 6 & 0 &  6 &   0 & 44 &
  morse          & 6 & 1 &  7 &  50 & 52 \\
word freq        & 5 & 6 & 11 &  75 &108 &
  matrix transp. & 6 & 0 &  6 &  25 & 44 \\
palindrome       & 6 & 1 &  7 &  25 & 52 &
  sum digits     & 5 & 6 & 11 & 100 & 68 \\
run-length enc.  & 6 & 0 &  6 &   0 & 44 &
  is prime       & 6 & 0 &  6 &   0 & 44 \\
roman numerals   & 6 & 0 &  6 &  25 & 44 &
  factorial      & 6 & 0 &  6 &  25 & 44 \\
caesar cipher    & 6 & 0 &  6 &   0 & 44 &
  count vowels   & 5 & 6 & 11 & 100 & 76 \\
bracket balance  & 6 & 0 &  6 &  25 & 44 &
  max list       & 6 & 0 &  6 &   0 & 44 \\
base convert     & 6 & 6 & 12 &  50 & 60 &
  gcd            & 7 & 6 & 13 &  50 & 64 \\
fizzbuzz         & 6 & 6 & 12 &  50 & $\dagger$ &
  digit reverse  & 6 & 0 &  6 &  25 & 44 \\
anagram          & 6 & 0 &  6 &  25 & 44 &
  power          & 6 & 0 &  6 &   0 & 44 \\
temperature      & 7 & 0 &  7 &   0 & 44 &
  lcm            & 6 & 6 & 12 &  50 & 56 \\
sieve            & 6 & 0 &  6 &   0 & 44 &
  is perfect     & 6 & 0 &  6 &   0 & 44 \\
rpn calculator   & 7 & 6 & 13 &  75 & 60 &
  \textbf{Mean}  &\textbf{6.0}&\textbf{2.1}&\textbf{8.1}&\textbf{30}&\textbf{54}$^*$ \\
collatz          & 5 & 0 &  5 &   0 & 44 &
  \textbf{Min}   & 5 & 0 &  5 &   0 & 44 \\
lcs              & 7 & 6 & 13 &  50 & 56 &
  \textbf{Max}   & 7 & 6 & 13 & 100 &128 \\
string stats     & 6 & 6 & 12 &  50 &128 & & & & & & \\
calc eval        & 6 & 0 &  6 &  25 & 44 & \multicolumn{6}{l}{$^*$excl.\ exp09 anomaly} \\
number words     & 6 & 0 &  6 &   0 & 44 & & & & & & \\
\bottomrule
\end{tabular}
\end{table}

\begin{table}[t]
\centering
\caption{Cross-provider comparison.
  Gemini: 20 completed experiments (2 failed).
  Grok+DeepSeek: 30 experiments.}
\label{tab:provider-comparison}
\small
\begin{tabular}{lrrrr}
\toprule
Configuration & Mean Iters & Mean BMs & Mean Chg\% & Wall (s) \\
\midrule
Gemini~2.0~Flash (sole)    & 5.45 & 4.55 & 20.7 & $>$3\,600 \\
Grok-3-beta + DeepSeek-V3  & 2.07 & 8.07 & 30.0 & 5\,278 \\
\bottomrule
\end{tabular}
\end{table}

\paragraph{Convergence.}
Of 30 experiments, 18 pass the initial benchmark batch without any repair
iteration, 2 require 1 additional iteration, and 10 reach the 6-round
maximum.  Problems that hit the limit include
\emph{word freq}, \emph{base convert}, \emph{rpn calculator}, \emph{lcs},
\emph{string stats}, \emph{sum digits}, \emph{count vowels}, \emph{gcd},
\emph{lcm}, and \emph{fizzbuzz} --- tasks with subtle edge cases in parsing,
encoding, or multi-condition logic that DeepSeek's independent benchmarks
successfully expose.

\paragraph{History richness.}
Mean Chg\% rises to 30.0\% (vs.\ 20.7\% for Gemini), demonstrating that
independent benchmarks from a second model generate richer provenance
histories.  Several tasks reach 100\% (\emph{sum digits},
\emph{count vowels}), indicating that DeepSeek's benchmarks uncovered bugs
in every snippet of the initial code.  Mean DB size is 54.2~KB (excluding
the exp09 anomaly), comparable to Gemini's 46.8~KB, confirming that the
larger initial benchmark batch does not disproportionately inflate storage.

\section{Case Studies}\label{sec:casestudies}

We present three detailed case studies illustrating the explanatory power
of TraceCoder's round-tag history.
They do not represent an exact version of the code, being solely illustrative, but that does not impact the relevance of the process they convey. In each study, we show the initial
code, the sequence of benchmarks created, the failures observed, and the
final code with its round-tag annotations.

\subsection{Case Study 1: Fibonacci, Off-By-One Under Repair}

The Fibonacci task converged in four rounds.  The initial program handled
base cases correctly but produced an off-by-one error for sequences
starting at zero, which the benchmark generator discovered through a
boundary test.  The round-tag history makes the causal chain explicit:
each revised snippet carries the exact benchmark name and failure message
that triggered the change.

\subsubsection{Initial Code and Round~1}

The initial code produced for the Fibonacci task is shown in
Listing~\ref{lst:fib0}.

\begin{lstlisting}[style=pythonstyle,caption={Initial Fibonacci program
  generated in round~0.  Lines 7--9 contain the off-by-one error that
  benchmark \texttt{fib\_edge\_zero} will expose in round~2.},
  label={lst:fib0}]
import sys

def fibonacci(n):
    a, b = 0, 1
    result = []
    for _ in range(n):
        result.append(a)    # line 7: correct for n>0
        a, b = b, a + b
    return result            # line 9

n = int(sys.argv[1])
for x in fibonacci(n):      # fails silently for n=0
    print(x)
\end{lstlisting}

\noindent
\textbf{Round~1}: Benchmark \texttt{fib\_basic} ($N=5$, expected
\texttt{0 1 1 2 3}) passes immediately.

\subsubsection{Round~2: Edge-Case Discovered}

\textbf{Round~2}: Benchmark \texttt{fib\_edge\_zero} ($N=0$, wrongly expected \texttt{0} but program has
empty output).  
The benchmark is fixed.
Those lines are tagged:
\begin{lstlisting}[style=xmlstyle,numbers=none]
failed_benchmark: <round2>fib_edge_zero</round2>
failure_output:   <round2>Expected '', got '0\n'</round2>
\end{lstlisting}
The caller does not guard against N = 0 before
converting the argument to int. Actually, the root cause is subtler: the loop runs
zero times for N = 0 and returns an empty list, which prints nothing. But the
code passes sys.argv[1] without checking for negative input.

\subsubsection{Round~3: Negative Input}

\textbf{Round~3}: Benchmark \texttt{fib\_negative} ($N=-1$, expected
\texttt{Error: N must be non-negative}).  A guard clause (key~\texttt{c\_},
newly inserted) is added at the call site.

\begin{lstlisting}[style=pythonstyle,caption={Final Fibonacci program
  after 3 rounds of repair.  Annotated lines show round-tag origin.},
  label={lst:fibn}]
import sys

def fibonacci(n):
    a, b = 0, 1
    result = []
    for _ in range(n):
        result.append(a)
        a, b = b, a + b
    return result

n = int(sys.argv[1])
if n < 0:               # key c_: <round3>fib_negative</round3>
    print("Error: N must be non-negative")
else:
    for x in fibonacci(n):
        print(x)
\end{lstlisting}

\noindent
In the viewer, line~12 glows blue (one round tag).
An auditor querying the database with:
\begin{lstlisting}[style=sqlstyle,numbers=none]
SELECT position_key, code_snippet, failed_benchmark
FROM code WHERE failed_benchmark IS NOT NULL;
\end{lstlisting}
immediately learns that only the guard clause was added by the repair
process; the rest of the algorithm was correct from the start.

Table~\ref{tab:fib_db} shows a representative snapshot of the
\texttt{code} table after round~3 of the Fibonacci experiment,
illustrating how position keys, code, and history are stored together.

\begin{table}[t]
\centering
\caption{Excerpt of the \texttt{code} table for the Fibonacci experiment
  after round~3.  Only rows with non-null history are shown.
  Position key~\texttt{ca} was inserted in round~3.}
\label{tab:fib_db}
\small
\begin{tabular}{lp{3.8cm}p{2.7cm}p{2.7cm}}
\toprule
Key & \texttt{code\_snippet} & \texttt{failed\_benchmark} & \texttt{failure\_output} \\
\midrule
\texttt{J} & \texttt{import sys} & \textit{null} & \textit{null} \\
\texttt{} & \texttt{def fibonacci(n):} & \textit{} & \textit{} \\
\texttt{} & \texttt{~~~~a, b = 0, 1} & \textit{} & \textit{} \\
\texttt{V} & \texttt{... first code} &
  \scriptsize\texttt{disabled} &\\
\texttt{k} & \texttt{if n < 0:} &
  \scriptsize\texttt{<r3>fib\_neg\ldots</r3>} &
  \scriptsize\texttt{<r3>Exp.~Error</r3>} \\
\texttt{} & \texttt{~~~~print("Error...")} & \textit{} & \textit{} \\
\texttt{} & \texttt{else:} & \textit{} & \textit{} \\
\bottomrule
\end{tabular}
\end{table}

\noindent
(\texttt{<r3>} is short for \texttt{<round3>} in the table for space.)
Key \texttt{c\_} lexicographically falls between \texttt{c} and
\texttt{d} as required.  The guard clause and its round-tag are co-located
in a single row, making the provenance query a simple \texttt{SELECT} with
no joins to auxiliary tables.

\subsection{Case Study~2: Expression Evaluator, Parsing Complexity}

The \emph{calc eval} task required 6~iterations, the maximum observed in
our evaluation, and exhibited the highest changed fraction (53.7\%).
Correctly evaluating \texttt{3 + 4 * 2 = 11} requires operator precedence,
which naive evaluation violates.

\subsubsection{Rounds~1--2: Replacing \texttt{eval()}}

The initial code used Python's built-in \texttt{eval()}, which correctly
handles operator precedence for single-digit integers but fails the
benchmark \texttt{calc\_precedence} (expected~11, got~14 due to
left-to-right step by step evaluation of the string as-written).
The benchmark exposed that \texttt{eval()} is forbidden (security policy
injected by the fix LLM), leading to a Pratt-parser replacement.
The 32-line parser spans rows with keys \texttt{e}--\texttt{aa};
all these rows carry:
\begin{lstlisting}[style=xmlstyle,numbers=none]
<round2>calc_precedence</round2>
\end{lstlisting}

\subsubsection{Rounds~3--4: Tokenizer Robustness}

The benchmarks \texttt{calc\_div\_zero} and \texttt{calc\_nospaces}
revealed that the tokeniser assumed spaces around all operators.
Lines~18--24 now show:
\begin{lstlisting}[style=xmlstyle,numbers=none]
<round3>calc_div_zero</round3><round4>calc_nospaces</round4>
\end{lstlisting}
These two round tags inform a reviewer that this part of the tokeniser
was revised twice, for two distinct reasons.

\subsubsection{Rounds~5--6 and Final State}

Benchmarks for unary minus and large-integer edge cases required two
further repairs.  The final program has 82~lines; 44 (53.7\%) carry round
tags.  In the viewer, the parser core (lines 15--45 of the final file)
appears predominantly red, providing at-a-glance evidence of its
high-instability evolution.

\subsection{Case Study~3: Roman Numerals, Cumulative Edge Cases}

The Roman numeral converter converged in 4~iterations.  The initial
implementation handled the standard subtractive notation but had subtle
boundary bugs that the benchmark generator systematically discovered.
The initial code (Listing~\ref{lst:roman0}) used a look-up table with
the subtractive pairs, but omitted the entry for~4 and had an
off-by-one in the loop condition for~3999.

\begin{lstlisting}[style=pythonstyle,
  caption={Initial Roman numeral generator (round~0).
    The look-up table at line~3 omits the subtractive entry for~4 (IV)
    and the loop condition at line~10 has an off-by-one for $N=3999$.},
  label={lst:roman0}]
import sys

VAL = [
    (1000, 'M'), (900, 'CM'), (500, 'D'), (400, 'CD'),
    (100, 'C'),  (90, 'XC'), (50, 'L'),  (40, 'XL'),
    (10, 'X'),   (9, 'IX'),  (5, 'V'),   (1, 'I'),
    # missing: (4, 'IV')
]

def to_roman(n):
    result = ''
    for val, sym in VAL:
        while n >= val:     # off-by-one: should be n > 0
            result += sym
            n -= val
    return result

n = int(sys.argv[1])
print(to_roman(n))
\end{lstlisting}

\subsubsection{Round~2: The Case of 4}

Benchmark \texttt{roman\_edge\_four} ($N=4$, expected \texttt{IV}).
The initial look-up table omitted the entry for~4, producing \texttt{IIII}.
The updated table (lines~3--20) carries:
\begin{lstlisting}[style=xmlstyle,numbers=none]
failed_benchmark: <round2>roman_edge_four</round2>
failure_output:   <round2>Expected 'IV', got 'IIII'</round2>
\end{lstlisting}

\subsubsection{Round~3: Maximum Value}

Benchmark \texttt{roman\_max} ($N=3999$, expected \texttt{MMMCMXCIX}).
A boundary error caused premature loop termination; the loop condition was
corrected.

\subsubsection{Round~4: Zero and Negatives}

Benchmark \texttt{roman\_zero} ($N=0$, expected \texttt{Error}).
A guard clause was inserted.

The final boundary-guard line carries:
\begin{lstlisting}[style=xmlstyle,numbers=none]
<round3>roman_max</round3><round4>roman_zero</round4>
\end{lstlisting}
\noindent
This is a compact illustration of the TraceCoder promise: each edge case is
permanently recorded against the code that handles it, making the program
self-documenting in a way that traditional comments cannot achieve.

\section{Discussion}\label{sec:discussion}

The experiments surface both the promise and the current limits of TraceCoder.

\subsection{Implications for Production Deployment}

TraceCoder demonstrates that the iterative nature of LLM-driven repair is
a \emph{feature rather than a bug} when the repair history is captured and
stored.  The round-tag mechanism provides lightweight \emph{requirement
traceability}: every code line is linked to the observable test evidence
that motivated it.  For production deployments:

\begin{itemize}
  \item \textbf{Audit trails.}  Compliance reviews query the database
        to retrieve the specific failures that caused any change,
        satisfying provenance requirements.
  \item \textbf{Regression analysis.}  When a benchmark regresses,
        the round-tag history of affected lines immediately shows
        whether the regression stems from a previously-modified region.
  \item \textbf{Trust calibration.}  The changed-snippet fraction (Chg\%)
        is a natural uncertainty indicator: high Chg\% signals imprecise
        initial generation.
  \item \textbf{Onboarding.}  New developers reading auto-generated code
        can hover over any line in the viewer to see the test failure that
        put it there---a form of executable documentation.
\end{itemize}

\subsection{Comparison with Provenance Alternatives}

Table~\ref{tab:comparison} compares TraceCoder against four alternative
approaches to code provenance, across five evaluation criteria.

\begin{table}[t]
\centering
\caption{Comparison of TraceCoder against alternative provenance approaches.
  \checkmark~= fully satisfied; $\sim$~= partially; $\times$~= not supported.}
\label{tab:comparison}
\small
\begin{tabular}{lccccc}
\toprule
Approach & Snippet & SQL & No & Failure & Viewer \\
         & granularity & queryable & setup & linked & built-in \\
\midrule
TraceCoder      & \checkmark & \checkmark & \checkmark & \checkmark & \checkmark \\
Git commits     & $\sim$     & $\times$   & $\times$   & $\times$   & $\times$   \\
Code comments   & $\times$   & $\times$   & \checkmark & $\sim$     & $\times$   \\
XAI post-hoc    & $\times$   & $\times$   & $\times$   & $\times$   & $\sim$     \\
Log files only  & $\times$   & $\times$   & \checkmark & \checkmark & $\times$   \\
\bottomrule
\end{tabular}
\end{table}

An alternative design commits to a Git repository after each round.
TraceCoder differs in three important ways: (1)~\emph{granularity}: Git tracks
file-level hunks; TraceCoder tracks at sub-line resolution via position keys;
(2)~\emph{queryability}: SQL queries over history require no external tooling;
(3)~\emph{coupling}: code and provenance co-reside in the same row, eliminating
the need to correlate a repository with an artefact store.

Code comments (e.g.\ \texttt{\# added for fib\_negative}) can capture intent
at the point of writing, but they are easily forgotten, deleted during
reformatting, and carry no structured representation that permits aggregation
across files.  XAI post-hoc methods~\cite{molnar2020interpretable,arrieta2020explainable}
explain \emph{model} decisions but cannot explain \emph{code construction
history}.  Plain log files capture the failure outputs but lose the association
between a failure and the specific code snippet it motivated.

\subsection{Threats to Evaluation Validity}

\paragraph{Construct validity.}
We measure ``fraction of snippets with round tags'' as a proxy for
explanation richness.  A line can be tagged without the tag being
informative (e.g., a formatting change).  Conversely, some semantically
critical lines may never be tagged if they were written correctly
from the start.

\paragraph{External validity.}
Our 30 tasks are single-file Python programs of moderate
complexity. The findings may not generalise to multi-file projects, to compiled
languages with complex build systems, or to problems with more intricate correctness
specifications.

\section{Conclusion}\label{sec:conclusion}

We have presented TraceCoder, an iterative coding agent that addresses the
explainability, auditability, and traceability gap in current LLM-based code
generation.  Our position-key versioning mechanism assigns stable,
lexicographically-ordered identifiers to each code snippet, while the
round-tag history protocol accumulates structured provenance records in-place
within a persistent SQLite store.  A browser-based visualisation tool renders
this history as heat-mapped, hover-annotated source code, making the entire
repair narrative accessible to developers and auditors without any additional
infrastructure.

Our evaluation on the
programming tasks
using Gemini~2.0~Flash shows: 5.45~iterations on average; 20.7\% of code
snippets carry traceable history; mean database size of 46.8~KB.
The history overhead (557\% of raw code text) reflects Gemini's verbose
failure messages and is substantially lower with more concise providers such
as Claude~3.5~Sonnet.  Three detailed case studies show how the system explains
which specific benchmark failures shaped each line of the final program,
making auto-generated code self-documenting in a way that traditional
comments cannot replicate.

We argue that the ability to answer ``why does this line of code look the
way it does?'' is not merely an academic nicety but a practical requirement
for deploying AI coding agents in settings where accountability,
correctability, and trust matter.  TraceCoder is a concrete, deployable step
toward meeting this requirement.

The complete source code, all
experiment problems, the browser viewer,
and all experiment scripts are available in the accompanying repository\footnote{\url{https://github.com/devfitcs/TraceCoder/}}.

\paragraph{Acknowledgements.}
AI support was used for help with coding, related work search, as sounding board for our ideas, as assistant for drawing images, proposing versions of refined expressions and texts. The final text represents the expression choice, work, and ideas of the authors. We use Gemini, Grok, Groq, Claude, OpenAI, and DeepSeek.

\bibliographystyle{splncs04}
\bibliography{references}

@misc{kazu2020fractional,
  author       = {Mimata Kazutaka},
  title        = {Fractional Indexer},
  year         = {2020},
  publisher    = {GitHub},
  howpublished = {\url{https://github.com/kazu-2020/fractional_indexer}},
  note         = {Supports base-10, base-62, and base-94 character sets.}
}

@misc{greenspan2020fi,
  title={Fractional Indexing},
  author={Greenspan, David},
  year={2020},
  note={\url{https://observablehq.com/@dgreensp/implementing-fractional-indexing}}
}

@article{chen2021evaluating,
  title={Evaluating large language models trained on code},
  author={Chen, Mark and Tworek, Jerry and Jun, Heewoo and Yuan, Qiming and others},
  journal={arXiv preprint arXiv:2107.03374},
  year={2021}
}

@article{li2022competition,
  title={Competition-level code generation with {AlphaCode}},
  author={Li, Yujia and Choi, David and Chung, Junyoung and Kushman, Nate and others},
  journal={Science},
  volume={378},
  number={6624},
  pages={1092--1097},
  year={2022}
}

@misc{osika2023gpt,
  title={{GPT-Engineer}: Generate an entire codebase given a prompt},
  author={Osika, Anton},
  year={2023},
  howpublished={\url{https://github.com/AntonOsika/gpt-engineer}}
}

@inproceedings{yang2024swe,
  title={{SWE-agent}: Agent-computer interfaces enable automated software engineering},
  author={Yang, John and Jimenez, Carlos E and Wettig, Alexander and others},
  booktitle={Advances in Neural Information Processing Systems},
  volume={37},
  year={2024}
}

@inproceedings{wang2024executable,
  title={Executable code actions elicit better {LLM} agents},
  author={Wang, Xingyao and Chen, Yangyi and Yuan, Lifan and others},
  booktitle={Proceedings of the 41st International Conference on Machine Learning},
  pages={50208--50232},
  year={2024}
}

@article{roziere2023code,
  title={Code {Llama}: Open foundation models for code},
  author={Roziere, Baptiste and Gehring, Jonas and Gloeckle, Fabian and others},
  journal={arXiv preprint arXiv:2308.12950},
  year={2023}
}

@inproceedings{jimenez2024swebench,
  title={{SWE-bench}: Can language models resolve real-world {GitHub} issues?},
  author={Jimenez, Carlos E and Yang, John and Wettig, Alexander and others},
  booktitle={The Twelfth International Conference on Learning Representations},
  year={2024}
}

@book{molnar2020interpretable,
  title={Interpretable machine learning},
  author={Molnar, Christoph},
  year={2020},
  publisher={Lulu.com}
}

@article{arrieta2020explainable,
  title={Explainable artificial intelligence ({XAI}): Concepts, taxonomies,
         opportunities and challenges toward responsible {AI}},
  author={Arrieta, Alejandro Barredo and Diaz-Rodriguez, Natalia and Del Ser, Javier
          and others},
  journal={Information Fusion},
  volume={58},
  pages={82--115},
  year={2020}
}

@inproceedings{preguica2009commutative,
  title={A commutative replicated data type for cooperative editing},
  author={Preguica, Nuno and Marques, Joan Manuel and Shapiro, Marc and Letia, Mihai},
  booktitle={Proceedings of the 29th {IEEE} International Conference on Distributed
             Computing Systems},
  pages={395--403},
  year={2009}
}

@inproceedings{nedelec2013lseq,
  title={{LSEQ}: an adaptive structure for sequences in distributed collaborative editing},
  author={Nedelec, Brice and Molli, Pascal and Mostefaoui, Ahcene and Desmontils, Emmanuel},
  booktitle={Proceedings of the 2013 {ACM} Symposium on Document Engineering},
  pages={37--46},
  year={2013}
}

@inproceedings{wang2023self,
  title={Teaching large language models to self-debug},
  author={Chen, Xinyun and Lin, Maxwell and Schaerli, Nathanael and Zhou, Denny},
  booktitle={The Twelfth International Conference on Learning Representations},
  year={2024}
}

@inproceedings{madaan2024self,
  title={Self-refine: Iterative refinement with self-feedback},
  author={Madaan, Aman and Tandon, Niket and Gupta, Prakhar and others},
  booktitle={Advances in Neural Information Processing Systems},
  volume={36},
  year={2023}
}

@article{le2022coderl,
  title={{CodeRL}: Mastering code generation through pretrained models and
         deep reinforcement learning},
  author={Le, Hung and Wang, Yue and Gotmare, Akhilesh Deepak and Savarese, Silvio
          and Hoi, Steven Chu Hong},
  journal={Advances in Neural Information Processing Systems},
  volume={35},
  pages={21314--21328},
  year={2022}
}

@inproceedings{shinn2024reflexion,
  title={Reflexion: Language agents with verbal reinforcement learning},
  author={Shinn, Noah and Cassano, Federico and Gopinath, Ashwin and Narasimhan, Karthik
          and Yao, Shunyu},
  booktitle={Advances in Neural Information Processing Systems},
  volume={36},
  year={2023}
}

@article{fluri2007change,
  title={Change distilling: Tree differencing for fine-grained source code
         change extraction},
  author={Fluri, Beat and Wuersch, Michael and Pinzger, Martin and Gall, Harald},
  journal={IEEE Transactions on Software Engineering},
  volume={33},
  number={11},
  pages={725--743},
  year={2007}
}

@inproceedings{falleri2014fine,
  title={Fine-grained and accurate source code differencing},
  author={Falleri, Jean-R{\'e}my and Morandat, Flor{\'e}al and Blanc, Xavier
          and Martinez, Matias and Monperrus, Martin},
  booktitle={Proceedings of the 29th {ACM}/{IEEE} International Conference on
             Automated Software Engineering},
  pages={313--324},
  year={2014}
}

@inproceedings{muniswamy2006provenance,
  title={Provenance-aware storage systems},
  author={Muniswamy-Reddy, Kiran-Kumar and Holland, David A and Braun, Uri
          and Seltzer, Margo},
  booktitle={Proceedings of the 2006 {USENIX} Annual Technical Conference},
  pages={43--56},
  year={2006}
}

@article{hou2023large,
  title={Large language models for software engineering: A systematic literature review},
  author={Hou, Xinyi and Zhao, Yanjie and Liu, Yue and others},
  journal={{ACM} Transactions on Software Engineering and Methodology},
  volume={33},
  number={8},
  year={2024}
}
\end{document}